\theoremstyle{plain}
\newtheorem{theorem}{Theorem}[section]
\newtheorem{proposition}[theorem]{Proposition}
\theoremstyle{definition}
\theoremstyle{remark}
\newcommand{\calA}{\mathcal{A}}
\newcommand{\calM}{\mathcal{M}}
\newcommand{\calN}{\mathcal{N}}
\newcommand{\calY}{\mathcal{Y}}
\newcommand{\PP}{\mathbb{P}}
\newcommand{\vx}{\boldsymbol{x}}
\newcommand{\vy}{\boldsymbol{y}}
\newcommand{\ve}{\boldsymbol{e}}
\newcommand{\vz}{\boldsymbol{z}}
\DeclareMathOperator*{\argmax}{arg\,max}
\newcommand{\1}{\mathds{1}}
\icmltitlerunning{Submitted to 2nd Workshop on Test-Time Adaptation for ICML2025}
\begin{document}

\twocolumn[
\icmltitle{Adaptive Diffusion Denoised Smoothing : Certified Robustness via Randomized Smoothing with Differentially Private Guided Denoising Diffusion}

\icmlsetsymbol{equal}{*}

\begin{icmlauthorlist}
\icmlauthor{Frederick Shpilevskiy}{ubc}
\icmlauthor{Saiyue Lyu}{equal,ubc,sn}
\icmlauthor{Krishnamurthy Dj Dvijotham}{sn}
\icmlauthor{Mathias L\'ecuyer}{ubc}
\icmlauthor{Pierre-Andr\'e No\"el}{sn}
\end{icmlauthorlist}

\icmlaffiliation{ubc}{University of British Columbia, Vancouver, Canada}
\icmlaffiliation{sn}{ServiceNow Research}
\icmlcorrespondingauthor{Frederick Shpilevskiy}{fshipil@cs.ubc.ca}
\icmlcorrespondingauthor{Mathias Lécuyer}{mathias.lecuyer@ubc.ca}

\icmlkeywords{Machine Learning, ICML}

\vskip 0.3in ]

\printAffiliationsAndNotice{\icmlEqualContribution} 

\begin{abstract}
We propose Adaptive Diffusion Denoised Smoothing, a method for certifying the predictions of a vision model against adversarial examples, while adapting to the input. Our key insight is to reinterpret a guided denoising diffusion model as a long sequence of adaptive Gaussian Differentially Private (GDP) mechanisms refining a pure noise sample into an image. We show that these adaptive mechanisms can be composed through a GDP privacy filter to analyze the end-to-end robustness of the guided denoising process, yielding a provable certification that extends the adaptive randomized smoothing analysis.
We demonstrate that our design, under a specific guiding strategy, can improve both certified accuracy and standard accuracy on ImageNet for an $\ell_2$ threat model. 
\end{abstract}

\section{Introduction}

Rapid advances in deep learning have enabled models to filter toxic content, assist in patient triage, and steer autonomous vehicles. Despite their remarkable accuracy, these models still remain alarmingly brittle: a few carefully chosen, almost invisible pixel changes can force an image classifier to mislabel the input. Such adversarial attacks have been demonstrated against deep learning systems in medical diagnosis \citep{finlayson2019adversarial}, autonomous driving \citep{eykholt2018robust}, and AI model jailbreaks \citep{carlini2023aligned}, underscoring the critical need for stringent safety and security protection.

Randomized Smoothing (RS) \citep{lecuyer2019certified,cohen2019certified} provides robustness guarantees against adversarial attacks for large models by averaging predictions over noisy versions of the input at test time.
To alleviate the negative impact of noise, recent work on diffusion denoised smoothing (DDS) \citep{carlini2023free, densepure, zhang2023diffsmooth, jeong2023multi} adds one \cite{carlini2023free} or several \cite{densepure} step(s) of diffusion denoising into RS, empirically improving classification performance with the same robustness guarantees.

However, the Gaussian noise required for RS still induces a steep trade-off between robustness and accuracy: to this day, RS can only certify against small attacks, and doing so lowers the accuracy on legitimate instances.
To empirically improve utility,  previous work on adversarial purification \citep{wang2022guided, wu2022guided, bai2024diffusion} has explored injecting guidance during the diffusion denoising process, adaptively steering the denoising trajectories towards a better quality final image. However, these methods do not provide a theoretical analysis of the certification guarantees. While such adaptive techniques can be hard to rigorously analyze \citep{croce2022evaluating,alfarra2022data}, recent work on Adaptive Randomized Smoothing (ARS) \citep{ars} enables test-time adaptivity with rigorous guarantees using GDP composition. Still, this setup does not cover diffusion denoising models, and the authors focus on a specially designed two-step model.

We propose {\bf Adaptive Diffusion Denoised Smoothing (ADDS)}, a more general design for adaptive RS models with a large number of steps based on denoising diffusion models, as illustrated in \cref{fig:pipeline}.
Our key insight is to see guided denoising diffusion models as a long sequence of GDP mechanisms with data-adaptive variance: a pure noise sample is iteratively refined by injecting input-dependent guidance at each step.
We show how to extend ARS with privacy filters for variance adaptive composition, providing an end-to-end certification analysis of guided diffusion denoising for RS. We evaluate our method on ImageNet. 

\begin{figure*}
  \centering
  \includegraphics[width=.95\textwidth]{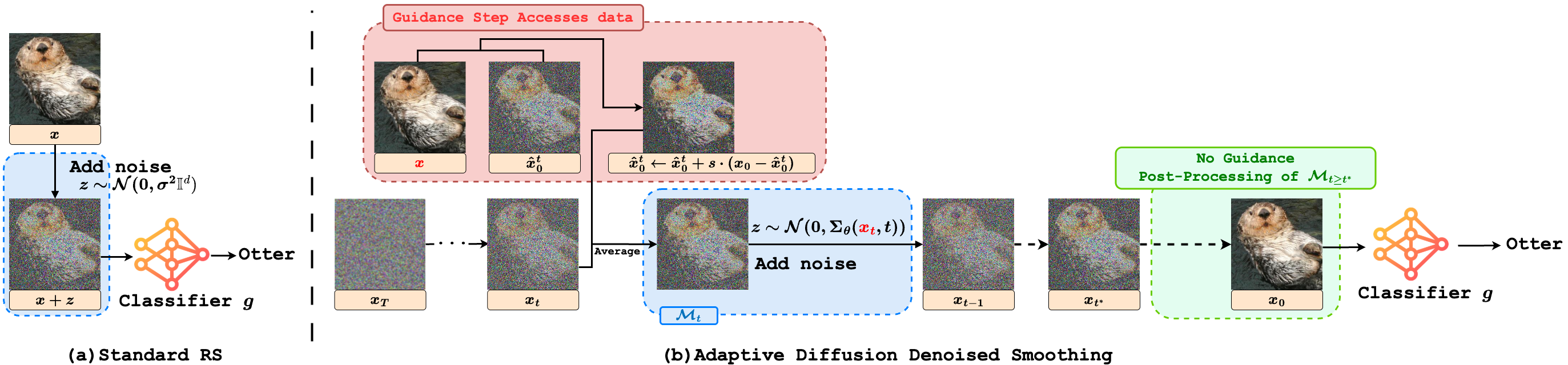}
  \vspace*{-4mm}
  \caption{(a) Randomized Smoothing adds noise to the input to create a smooth model from a base model $g$. (b) Our Adaptive Diffusion Denoised Smoothing starts with a pure noise image $\vx_T$, and guides a reverse-diffusion process at steps $T\leq t\leq t^*$ (the red box, showing one guiding step) towards reconstructing the target image $\vx_0\approx \vx$, for a final prediction by $g$. Starting from $\vx_t$, the pretrained diffusion model predicts a less noisy version of the input $\vx_0^t$, which is then updated by guiding towards $\vx$. The guiding step is then combined with $\vx_t$, leading to a less noisy intermediary image (in the blue box). Finally, the intermediary image is re-noised to output $\vx_{t-1}$.
  We leverage this re-noising step for our DP guarantees at step $t$.
  After timestep $t^*$, there is no guidance thus does not require access to data.  Using GDP composition and privacy filters enable end-to-end analysis over the $T$ steps.}
  \label{fig:pipeline}
\vspace{-0.1cm}
\end{figure*}

\section{Background and Related Work}


\textbf{Adversarial Examples} \citep{adex} of radius $r$ in the $L_p$ threat model are perturbed inputs $\vx+\ve$, with perturbation $\ve\in B_p(r)$ in the $L_p$ ball of radius $r$, that make a classifier $g$ misclassify the input $\vx$, i.e., $g(\vx+\ve)\neq g(\vx)$. These attack inputs are made against classifiers at test time. 

\textbf{Randomized Smoothing (RS)} \citep{lecuyer2019certified,cohen2019certified} is a scalable approach to certify model predictions against any adversarial attacks under $L_2$ norm, which randomizes a base model $g$ by adding spherical Gaussian noise to its input, and produces a smoothed classifier $\calM_s$ that returns the class with highest expectation over the noise: $\calM_S(\vx) \triangleq \arg\max_{\vy \in \mathcal{Y}} \mathbb{P}_{\vz \sim \mathcal{N}(0, \sigma^2 \mathbf{I}^d)}\big(g(\vx + \vz) = \vy\big)$.
Calling $\underline{p_+}, \overline{p_-} \in [0, 1]$ the lower bound on $\calM_s$'s top class prediction and the upper bound on each other prediction, i.e. $\PP(\calM(\vx) = \calM_S(\vx)) \ge \underline{p_+} \ge \overline{p_-} \ge \max_{\vy \neq \calM_S(\vx)} \PP(\calM(\vx) = \boldsymbol{y})$, and with $\Phi^{-1}$ the inverse Gaussian CDF, the certificate size is $r_{\vx} = \frac{\sigma}{2}\big( \Phi^{-1}(\underline{p_+}) - \Phi^{-1}(\overline{p_{-}})  \big)$. That is, $\forall \ve \in B_2(r_{\vx}), \calM_S(\vx + \ve) = \calM_S(\vx)$.

\textbf{Adaptive Randomized Smoothing (ARS)} \citep{ars} extends RS to sequences of data-dependent steps. The method leverages Gaussian Differential Privacy (GDP) \citep{gdp}, an extension of $(\varepsilon, \delta)$-Differential Privacy, to analyze the end-to-end composition of several steps in terms of GDP, with regards to $L_2$-norm adversarial attacks.
Formally, consider $k$ randomized Gaussian mechanisms $\calM_1, \calM_2, \dots, \calM_k$ that apply to an input $\boldsymbol{x}$ and the results of all previous mechanisms, i.e., $m_i \sim \calM_i(\boldsymbol{x} \mid m_{<i})$ for $i = 1,2,\dots,k$. A final classifier applies to the outputs of all steps $g(m_1, m_2, \dots, m_k) =\boldsymbol{y}\in \calY$. Together, these create a composed randomized mechanism $\calM : \boldsymbol{x} \rightarrow g(m_1, m_2, \dots, m_k)$. Define the smoothed classifier $\calM_S$ as $\calM_S(\boldsymbol{x}) \triangleq \argmax_{\boldsymbol{y} \in \calY}\PP(\calM(\boldsymbol{x}) = \boldsymbol{y})$. According to \citet[Theorem 2.3]{ars}, if for all $r \ge 0$ each $\calM_i$ is $\frac{r}{\sigma_i}$-GDP in a $B_2(r)$-neighbourhood, then the smoothed classifier is robust to all perturbations $\boldsymbol{z} \in B_2(r_{\vx})$, such that $\calM_S(\boldsymbol{x}) = \calM_S(\boldsymbol{x} + \boldsymbol{z})$, with
\vspace{-0.3em}
\begin{equation}\label{eq:ars}
    r_{\vx} = \frac{1}{2\sqrt{\sum_{i=1}^k \frac{1}{\sigma^2_i}}}\left(\Phi^{-1}(\underline{p_+}) - \Phi^{-1}(\overline{p_-})\right).
\end{equation}



Notice that in ARS (\cref{eq:ars}) each step $\calM_i$ is $r/\sigma_i$-GDP for an attack of size $r$, with Gaussian noise variance $\sigma^2_i$ fixed in advance. In practice, one might want to adapt the variance based on the results of previous steps.

\textbf{Privacy Filters} \citep{rogers2016privacy} support the composition of privacy mechanisms with adaptive (dependent on previous steps) privacy guarantees, as long as their composition always remains below an upper-bound fixed in advance. Specifically, GDP privacy filters \cite{smith2022fully,koskela2022individual} state that, for a parameter $\mu$ fixed in advance, any composition of $\mu_i$-GDP mechanisms (where $\mu_i$ can be chosen based on the results of previous mechanisms) such that $\sum_i \mu_i^2 \leq \mu^2$ is $\mu$-GDP.


\textbf{Denoising Diffusion Probabilistic Models (DDPM)} \citep{ddpm,ddim} is a type of generative model that learns to denoise a noisy sample. The forward process of DDPM constructs a Markov chain $\{\vx_0,\cdots,\vx_T\}$ from a clean image to pure noise. At each time step, it adds noise to the previous state $\vx_{t-1}$ via $\boldsymbol{x}_t = \sqrt{1-\beta_t}\boldsymbol{x}_{t-1}+\calN(0, \beta_t\mathbf{I})$, where $0<\beta_1<\beta_2<\cdots<\beta_T<1$ control the variances of the diffusion process. Denoting $\bar{\alpha}_t=\prod_{s=1}^t\alpha_s=\prod_{s=1}^t(1-\beta_t)$, we can obtain $q(\vx_t|\vx)=\calN\big(\vx_t; \sqrt{\bar\alpha_t}\vx, (1-\bar\alpha_t)\mathbf{I}\big)$. At each time step of the reverse process, \citet[Equation 11]{ddpm} outputs a reverse Markov chain $p_\theta(\vx_{t-1}|\vx_t)=\calN\big(\vx_{t-1}; \boldsymbol{\mu}_\theta(\vx_t,t),\mathbf{\Sigma}_\theta(\vx_t,t)\big)$, where $\boldsymbol{\mu}_\theta(\vx_t,t)=\frac{1}{\sqrt{\alpha_t}}\big(\boldsymbol{x}_t-\frac{1-\alpha_t}{\sqrt{1 -\bar{\alpha}_t}}\boldsymbol{\epsilon}_\theta(\boldsymbol{x}_t, t)\big)$ and $\boldsymbol{\epsilon}_\theta(\boldsymbol{x}_t, t)$ is a learned predictor. 
Following \citet{nichol2021improved}, models predict a diagonal covariance matrix $\Sigma_\theta(\vx_t,t)$ that depends on $\vx_t$, and where the noise added to the image can differ by pixel. One denoising step follows: 
\vspace{-0.3em}
\begin{equation}\label{eq:ddpm}
    \scalebox{0.95}{$\boldsymbol{x}_{t-1} = \frac{1}{\sqrt{\alpha_t}}\left(\boldsymbol{x}_t-\frac{1-\alpha_t}{\sqrt{1 -\bar{\alpha}_t}}\boldsymbol{\epsilon}_\theta(\boldsymbol{x}_t, t)\right)+\calN(0, \Sigma_\theta(\vx_t,t)).$}
\end{equation}


\textbf{Diffusion Denoised Smoothing (DDS)} \citep{carlini2023free} utilizes off-the-shelf high-fidelity 
diffusion models \citep{dhariwal2021diffusion} as powerful denoisers with no extra training, to remove added smoothing Gaussian noise via a one-shot denoising step. The algorithm first matches the RS perturbed data point $\vx_{\text{rs}}=\vx+\calN(0,\sigma^2\mathbf{I})$ with the noised data point from DDPM forward process $\vx_t=\sqrt{\bar{\alpha}_t}\vx+\calN(0,(1-\bar\alpha_t)\mathbf{I})$ to output a unique time step $t^*$ that satisfies the match. Next it embeds a RS sample on the DDPM trajectory by $\vx_{t^*}=\sqrt{\bar\alpha_{t^*}}\cdot\vx_{\text{rs}}$. Then the one-shot denoising step outputs $\hat{\vx}_0^{t^*}:=\frac{1}{\sqrt{\bar\alpha_{t^*}}}\big(\vx_{t^*}-(\sqrt{1-\bar\alpha_{t^*}})\boldsymbol{\epsilon}_\theta(\boldsymbol{x}_{t^*}, t^*)\big)$ to feed it to the classifier.
DensePure \citep{densepure} achieves better robustness results by repeating a multi-hop denoising process multiple times and taking a majority vote over purified outputs as the final prediction.
%
%

\section{Adaptive Diffusion Denoised Smoothing (ADDS)}
Current DDS certificates do not support adaptive choices during denoising, which leaves potential robustness untapped. To close this gap we propose ADDS, an adaptive version of DDS which demonstrates that privacy-inspired adaptive analysis can tighten certificates for diffusion based defences.
In what follows, we describe a GDP filter (\cref{alg:stop}) based guided DDPM sampling algorithm in (\cref{alg:sampling}).
We first analyze the individual pixel sensitivity for one denoising step (\cref{prop:one}), and use it to compose the analysis over many steps (\cref{prop:sens}). This yields an end-to-end robustness analysis (\cref{prop:adds}).

As noted in Equation (12) of \citet{ddim}, one can generate a sample $\vx_{t-1}$ from a sample $\vx_t$ by utilizing the predicted original image $\hat{\vx}_0^t$. Specifically, we substitute $\hat{\vx}_0^t:=\frac{1}{\sqrt{\bar\alpha_t}}\big(\vx_t-(\sqrt{1-\bar\alpha_t})\boldsymbol{\epsilon}_\theta(\boldsymbol{x}_t, t)\big)$ and rearrange \cref{eq:ddpm} to obtain an expression for the denoising step in terms of the state $\vx_t$ and prediction $\hat{\vx}_0^t$ only,

\vspace{-1.4em}
\begin{align}
\boldsymbol{x}_{t-1}
    &= \tfrac{1}{\sqrt{\alpha_t}}\left(\boldsymbol{x}_t-\tfrac{1-\alpha_t}{\sqrt{1 -\bar{\alpha}_t}}\boldsymbol{\epsilon}_\theta(\boldsymbol{x}_t, t)\right)+\calN(0, \Sigma_\theta(\vx_t,t))\nonumber\\
    &=\tfrac{1}{\sqrt{\alpha_t}}\left(\boldsymbol{x}_t-\tfrac{1-\alpha_t}{\sqrt{1 -\bar{\alpha}_t}}\bigl(\tfrac{1}{\sqrt{1-\bar{\alpha}_t}}\vx_t-\tfrac{\sqrt{\bar{\alpha}_t}}{\sqrt{1-\bar{\alpha}_t}} \hat{\vx}_0^t\bigr) \right)\nonumber\\
    & \,\,\,\,\,\,\,\,\,+\calN(0, \Sigma_\theta(\vx_t,t))\nonumber\\
    &=\tfrac{\sqrt{\bar\alpha_{t-1}}(1-\alpha_t)}{1-\bar{\alpha}_t}\cdot \hat{\vx}_0^t + \tfrac{(1-\bar\alpha_{t-1})\sqrt{\alpha_{t}}}{1-\bar\alpha_t}\cdot \vx_t\nonumber\\
    & \,\,\,\,\,\,\,\,\,+\calN(0, \Sigma_\theta(\vx_t,t))
    \label{eq:ddpmre}
\end{align}
\vspace{-2em}


Inspired by the Backward Universal Guidance of \citet{bansal2023universal}, we introduce guidance in \cref{eq:ddpmre} via shifting $\hat{\vx}_0^t$ towards $\vx$ with scale $s$, yielding $\hat{\vx}_0^t\gets \hat{\vx}_0^t + s\cdot (\vx - \hat{\vx}_0^t)$.
We can thus view the guided $\hat{\vx}_0^t$ as a convex combination $(1-s)\hat{\vx}_0^t+s\vx$, which becomes standard DDPM sampling when $s=0$. 

Using definitions above, we can formulate the guided denoising process as a sequence of GDP mechanisms $\calM_t:\vx\rightarrow \calA_t(\vx)+\vz,\vz\sim \calN(0, \Sigma_\theta(\vx_t,t))$ where $\calA_t(\vx) = \frac{\sqrt{\bar\alpha_{t-1}}(1-\alpha_t)}{1-\bar{\alpha}_t}\left((1-s)\cdot\hat{\vx}_0^t+s\cdot\vx\right)+\frac{(1-\bar\alpha_{t-1})\sqrt{\alpha_{t}}}{1-\bar\alpha_t}\cdot \vx_t$. Note that $\Sigma_{\theta}$ is a diagonal covariance matrix, such that each pixel of $\vx$ gets an independent draw with potentially different variance \citep{nichol2021improved}. We now perform our certified guarantee analysis first on one pixel and then derive the overall guarantee for the whole image based on the pixel-wise results.

Consider a fixed pixel $i$ ($i \in \{1, d\}$) of $\vx$, with an adversarial change of size $r_i$. Denote $\Sigma_{\theta}(\vx_t,t)=\text{diag}(\sigma_{t,1}^2, \cdots,\sigma_{t,i}^2,\cdots,\sigma_{t,d}^2)$.
At step $t$, the denoising process at pixel $i$ is a GDP mechanism $\calM_{t,i}:\vx \rightarrow \calA_{t,i}(\vx)+\vz,\vz\sim\calN(0,\sigma^2_{t,i})$, with sensitivity:
\begin{align*}
    \Delta\calA_{t,i}(\vx)&=\max_{e_i\in B_2(r_i)}\|\calA_{t,i}(\vx+e_i)-\calA_{t,i}(\vx)\|\nonumber\\
    &\leq r_i \cdot s\cdot \frac{\sqrt{\bar\alpha_{t-1}}(1-\alpha_t)}{1-\bar{\alpha}_t} ,
\end{align*}
where $e_j$ is a vector of zeros except for pixel $i$. This directly yields \citep{gdp}:


\begin{algorithm}[tb]
\caption{PrivacyFilter}
\label{alg:stop}
\begin{algorithmic}
    \REQUIRE Per pixel budget $\Lambda, s, t$.
    \STATE $\Lambda' \gets \Lambda - s^2\cdot \frac{\bar\alpha_{t-1}(1-\alpha_t)^2}{(1-\bar{\alpha}_t)^2\cdot \sigma_t^2}$
    \STATE {\bfseries Return:} $\Lambda$, {\em no} {\bfseries if} $\Lambda' \le 0$
    \STATE {\bfseries Return:} $\Lambda'$, {\em ok} {\bfseries otherwise}
\end{algorithmic}
\end{algorithm}

\begin{algorithm}[tb]
\caption{Clean Image Guided Denoising}
\label{alg:sampling}
\begin{algorithmic}
   \REQUIRE original image $\vx$, guidance scale $s$, total RS variance $\sigma^2$
   \STATE Initialize $\vx_T\gets$ sample from $\mathcal{N}(0,\textbf{I})$
   \STATE Initialize $\mu = 1/\sigma$, $\Lambda \gets \mu \1^d$  \hfill{\footnotesize $	\triangleright$ \texttt{Vector of all $\mu$}}
   \FOR{$t$ {\bfseries from} $T$ {\bfseries to} $1$}
   \STATE $\hat{\vx}_0^t\gets\frac{1}{\sqrt{\bar\alpha_t}}\big(\vx_t-(\sqrt{1-\bar\alpha_t})\boldsymbol{\epsilon}_\theta(\boldsymbol{x}_t, t)\big)$
   \STATE $\Lambda$, filter $\gets$ PrivacyFilter($\Lambda, s, t$) \hfill {\footnotesize $\triangleright$ \texttt{Alg.\ref{alg:stop} filter}}
   \IF{filter == {\em ok}}
   \STATE $\hat{\vx}_0^t\gets \hat{\vx}_0^t + s\cdot (\vx - \hat{\vx}_0^t)$
   \ENDIF
   \STATE $\vx_{t-1}\sim \mathcal{N}(\frac{\sqrt{\bar\alpha_{t-1}}(1-\alpha_t)}{1-\bar{\alpha}_t}\cdot \hat{\vx}_0^t + \frac{(1-\bar\alpha_{t-1})\sqrt{\alpha_{t}}}{1-\bar\alpha_t}\cdot \vx_t, \Sigma_{\theta}(\vx_t,t))$
   \ENDFOR
   \ENSURE $\vx_0$
\end{algorithmic}
\end{algorithm}

\begin{proposition}[One step denoising budget]\label{prop:one}
    \begin{align*}\label{eq:per-pixel-step}
        \mu_{t,i}^2=\tfrac{\Delta\calA_{t,i}^2}{\sigma_{t,i}^2} = \frac{r^2_i}{\sigma_{t,i}^2} \cdot s^2 \cdot \frac{\bar\alpha_{t-1}(1-\alpha_t)^2}{(1-\bar{\alpha}_t)^2} .
    \end{align*}
\end{proposition}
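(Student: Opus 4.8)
The plan is to derive \cref{prop:one} by chaining two ingredients already in place above: the Gaussian-mechanism characterization of GDP \citep{gdp}, and the per-pixel sensitivity bound on $\calA_{t,i}$ displayed immediately before the statement. First I would fix the timestep $t$ and pixel $i$ and adopt the conditional (adaptive) viewpoint of \citet{ars}: when analyzing $\calM_{t,i}$ we condition on everything produced by the earlier mechanisms, in particular on $\vx_t$. Consequently $\hat{\vx}_0^t=\tfrac{1}{\sqrt{\bar\alpha_t}}\bigl(\vx_t-\sqrt{1-\bar\alpha_t}\,\boldsymbol{\epsilon}_\theta(\vx_t,t)\bigr)$ and $\Sigma_\theta(\vx_t,t)$ are fixed quantities, so the clean image $\vx$ enters the mean map $\calA_{t,i}$ only through the single linear guidance term $\tfrac{\sqrt{\bar\alpha_{t-1}}(1-\alpha_t)}{1-\bar\alpha_t}\,s\,x_i$.

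Second, I would evaluate the sensitivity exactly rather than merely bound it. For a perturbation $e$ that is zero outside pixel $i$ with $\|e\|\le r_i$, every term of $\calA_{t,i}$ is unchanged except the guidance term, so $\calA_{t,i}(\vx+e)-\calA_{t,i}(\vx)=\tfrac{\sqrt{\bar\alpha_{t-1}}(1-\alpha_t)}{1-\bar\alpha_t}\,s\,e_i$; taking the maximum over $|e_i|\le r_i$ gives $\Delta\calA_{t,i}=r_i\,s\,\tfrac{\sqrt{\bar\alpha_{t-1}}(1-\alpha_t)}{1-\bar\alpha_t}$, which coincides with (and in fact attains) the bound stated above the proposition. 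Then I would invoke the fact that releasing $\calA_{t,i}(\vx)+\vz$ with $\vz\sim\calN(0,\sigma_{t,i}^2)$ and $L_2$-sensitivity $\Delta\calA_{t,i}$ in the $B_2(r_i)$-neighbourhood is $\mu_{t,i}$-GDP with $\mu_{t,i}=\Delta\calA_{t,i}/\sigma_{t,i}$ \citep{gdp}. Squaring and substituting the sensitivity yields $\mu_{t,i}^2=\Delta\calA_{t,i}^2/\sigma_{t,i}^2=\tfrac{r_i^2}{\sigma_{t,i}^2}\,s^2\,\tfrac{\bar\alpha_{t-1}(1-\alpha_t)^2}{(1-\bar\alpha_t)^2}$, which is exactly the claim.

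There is no serious technical difficulty here; the one point that deserves care — and the only place an error could creep in — is the first step, namely justifying that conditioning on the history freezes $\vx_t$, $\hat{\vx}_0^t$, and $\Sigma_\theta(\vx_t,t)$, so that the sensitivity captures only the guidance coefficient and nothing else. I would also remark on the boundary behaviour: at steps where \cref{alg:stop} returns \emph{no}, the update of \cref{alg:sampling} applies no guidance, $\calA_{t,i}$ is independent of $\vx$, its sensitivity is $0$, and the step contributes $\mu_{t,i}^2=0$; thus \cref{prop:one} quantifies precisely the budget charged at the steps where guidance is injected, which is exactly the quantity the filter in \cref{alg:sampling} accumulates.
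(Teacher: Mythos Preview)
Your proposal is correct and follows the same route the paper takes: the paper derives the per-pixel sensitivity bound for $\calA_{t,i}$ and then states that the proposition ``directly yields'' from the Gaussian-mechanism GDP characterization \citep{gdp}, which is exactly your squaring of $\mu_{t,i}=\Delta\calA_{t,i}/\sigma_{t,i}$. Your write-up is in fact more careful than the paper's one-line justification, making explicit the conditioning on $\vx_t$ (so that $\hat{\vx}_0^t$ and $\Sigma_\theta$ are frozen) and noting that the sensitivity bound is attained with equality; the remark on the filter-rejected steps is a welcome clarification but not needed for the proposition itself.
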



\begin{proposition}[End-to-end Pixel GDP]\label{prop:sens}
    Consider as neighbours any two inputs differing in pixel $i$, $\vx$ and $\vx + e_i$, by a size at most $r_i$ ($e_i\in B_2(r_i)$). Under this neighbouring definition, Algorithm \ref{alg:sampling} is $\tfrac{r_i}{\sigma}$-GDP.
\end{proposition}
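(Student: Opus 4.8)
The plan is to recognize Algorithm~\ref{alg:sampling}, restricted to a single pixel $i$, as an adaptive composition of Gaussian mechanisms acting on $\vx$, and to close that composition with the GDP privacy filter enforced by Algorithm~\ref{alg:stop}. First I would fix pixel $i$ and condition on all the randomness realized before step $t$; this determines $\vx_t$, hence $\hat\vx_0^t$ and the diagonal covariance $\Sigma_\theta(\vx_t,t)$. With the past fixed, the step-$t$ update is exactly the Gaussian mechanism $\vx \mapsto \calA_t(\vx) + \calN(0,\Sigma_\theta(\vx_t,t))$; a pixel-$i$ perturbation $\vx \mapsto \vx+e_i$ with $e_i\in B_2(r_i)$ changes only the guidance term $s(\vx-\hat\vx_0^t)$ and therefore shifts $\calA_{t,i}$ by at most $r_i\,s\,\tfrac{\sqrt{\bar\alpha_{t-1}}(1-\alpha_t)}{1-\bar\alpha_t}$ in coordinate $i$ while leaving the other coordinates unchanged. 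Hence, conditioned on the past, step $t$ is a one-dimensional Gaussian mechanism whose GDP parameter is precisely the $\mu_{t,i}$ of \cref{prop:one} when the guidance branch is taken, and $0$ when \texttt{PrivacyFilter} returns \emph{no} (the update then has no dependence on $\vx$).

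Next I would argue that the adaptive choices are admissible: both the noise scale $\sigma_{t,i}$ (through $\Sigma_\theta(\vx_t,t)$) and the \texttt{ok}/\texttt{no} decision of \texttt{PrivacyFilter} are measurable functions of previously released quantities only, never of $\vx$ directly, so the per-step parameters $\mu_{t,i}$ --- data-dependent through the trajectory, yet previous-output-measurable --- are exactly the kind of adaptive choice a privacy filter accommodates and the fixed-variance composition \cref{eq:ars} does not.

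Then I would track the per-pixel budget: coordinate $i$ of $\Lambda$ starts at the fixed value $1/\sigma^2$ (the square of the target parameter $\mu=1/\sigma$), is non-increasing in $t$, and drops by exactly $s^2\tfrac{\bar\alpha_{t-1}(1-\alpha_t)^2}{(1-\bar\alpha_t)^2\sigma_{t,i}^2} = \mu_{t,i}^2/r_i^2$ precisely at the steps where guidance is applied, with that drop permitted only while the residual stays non-negative; summing over those steps (the others contribute $0$) gives $\sum_{t=1}^T \mu_{t,i}^2 \le (r_i/\sigma)^2$. Finally I would invoke the GDP privacy filter of \citet{smith2022fully,koskela2022individual} --- any adaptive composition of $\mu_{t,i}$-GDP mechanisms whose realized parameters satisfy $\sum_t \mu_{t,i}^2 \le \mu^2$ for a $\mu$ fixed in advance is $\mu$-GDP --- with $\mu = r_i/\sigma$, to conclude that the map $\vx\mapsto\vx_0$ computed by Algorithm~\ref{alg:sampling} is $\tfrac{r_i}{\sigma}$-GDP under the pixel-$i$ neighbouring relation.

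The step I expect to be the main obstacle is the first one rather than the bookkeeping: one must check carefully that, after conditioning on the past, the per-step map is genuinely a Gaussian mechanism in $\vx$ with the claimed pixel-$i$ sensitivity \emph{uniformly} over the conditioning --- in particular that the diagonal structure of $\Sigma_\theta$ and the linearity of the guidance shift confine the perturbation to coordinate $i$ at every step --- and that the filter's decisions carry no hidden dependence on $\vx$, so that the hypotheses of the privacy filter theorem are met on the nose. Once that holds, the invariant of Algorithm~\ref{alg:stop} together with the filter composition theorem close the proof.
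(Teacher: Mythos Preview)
Your proposal is correct and follows essentially the same route as the paper: bound the per-step pixel-$i$ GDP parameter via \cref{prop:one}, observe that the \texttt{PrivacyFilter} invariant forces $\sum_t \mu_{t,i}^2 \le (r_i/\sigma)^2$, and close with the GDP privacy filter of \citet{smith2022fully,koskela2022individual}. The paper's version is terser and omits the conditioning and measurability checks you spell out, but the logical skeleton is identical; your extra care (and your silent correction of the budget initialization to $\mu^2 = 1/\sigma^2$) only makes the argument cleaner.
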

\begin{proof}
Algorithm \ref{alg:stop} ensures that over any run of Algorithm \ref{alg:sampling} we have $\sum_t \frac{1}{\sigma_i^2} \cdot s^2 \cdot \frac{\bar\alpha_{t-1}(1-\alpha_t)^2}{(1-\bar{\alpha}_t)^2} \leq \mu^2$. This in turns implies that $\sum_t \mu^2_{t,i} \leq r_i^2 \mu^2 = \big( \tfrac{r_i}{\sigma} \big)^2$. This is a valid GDP filter \cite{smith2022fully,koskela2022individual}, ensuring that Algorithm \ref{alg:sampling} is $\tfrac{r_i}{\sigma}$-GDP.
\end{proof}

Based on the pixel wise results we can obtain the certified guarantee for the whole image:

\begin{theorem}[Adaptive Diffusion Denoised Smoothing]\label{prop:adds}
    Consider the guided DDPM denoising process from Algorithm \ref{alg:sampling}, with total Randomized Smoothing variance $\sigma^2$, coupled with a predictive model $g(\cdot)$. Consider the associated smoothed model $M_S: \vx \rightarrow \argmax_{y\in\mathcal{Y}}\PP_{\vx_0 \sim Alg. \ref{alg:sampling}}(g(\vx_0)=y)$.
    
    Let $y_+ \triangleq M_S(\vx)$ be the prediction on input $\vx$, and let $\underline{p_+}, \overline{p_-} \in [0, 1]$ be such that $\PP(g(\vx_0) = y_+) \geq \underline{p_+} \geq \overline{p_-} \geq \max_{y_- \neq y_+} \PP(g(\vx_0) = y_-)$.
    
    Then $\forall e \in B_2(r_x), \ M_S(\vx + e) = M_S(\vx)$, with:
    \begin{align*}
        r_{\vx}=\frac{\sigma}{2}\left( \Phi^{-1}(\underline{p_+}) - \Phi^{-1}(\overline{p_-}) \right) .
    \end{align*} 
\end{theorem}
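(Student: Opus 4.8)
The plan is to lift the single‑pixel guarantee of \cref{prop:sens} to a guarantee for an arbitrary $\ell_2$ perturbation, and then convert that into a prediction certificate by the standard hypothesis‑testing argument used for \cref{eq:ars}. Fix $e\in B_2(r_{\vx})$ with coordinates $e_1,\dots,e_d$, and view \cref{alg:sampling} as the adaptive composition of the Gaussian mechanisms $\calM_T,\calM_{T-1},\dots,\calM_1$ coming from \cref{eq:ddpmre}, where $\calM_t$ maps $\vx$ together with the history of previously sampled states $\vx_T,\dots,\vx_t$ to $\vx_{t-1}\sim\calN(\calA_t(\vx),\Sigma_\theta(\vx_t,t))$.

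The key step is that, conditioned on the history, step $t$ is a pure mean shift of a fixed Gaussian: both $\Sigma_\theta(\vx_t,t)$ and $\hat\vx_0^t$ are determined by $\vx_t$, so the covariance is identical in the $\vx$‑run and the $(\vx+e)$‑run, and the only difference is $\calA_t(\vx+e)-\calA_t(\vx)$, which equals $\tfrac{\sqrt{\bar\alpha_{t-1}}(1-\alpha_t)}{1-\bar\alpha_t}\,s\,e$ when the filter leaves guidance on at step $t$ and $\boldsymbol 0$ otherwise (when guidance is off, $\calA_t$ does not depend on $\vx$ at all). Since the covariance is diagonal, $\calM_t(\cdot\mid\text{history})$ is $\mu_t(e)$‑GDP with $\mu_t(e)^2=\sum_i \tfrac{1}{\sigma_{t,i}^2}\bigl(\tfrac{\sqrt{\bar\alpha_{t-1}}(1-\alpha_t)}{1-\bar\alpha_t}\bigr)^2 s^2 e_i^2=\sum_i\mu_{t,i}^2$, i.e.\ exactly \cref{prop:one} with $r_i\mapsto|e_i|$, summed over pixels. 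Swapping the order of summation, on every run $\sum_t\mu_t(e)^2=\sum_i\bigl(\sum_t\mu_{t,i}^2\bigr)\le\sum_i(|e_i|/\sigma)^2=\|e\|_2^2/\sigma^2\le r_{\vx}^2/\sigma^2$, where the middle inequality is precisely the per‑pixel budget that \cref{alg:stop} enforces inside \cref{alg:sampling} (the bound already used in the proof of \cref{prop:sens}). The $\mu_t(e)$ are history‑measurable and this running sum never exceeds $r_{\vx}^2/\sigma^2$ along any sample path, so the GDP privacy filter / fully‑adaptive composition theorem \citep{smith2022fully,koskela2022individual} gives that the composed map $\vx\mapsto\vx_0$ of \cref{alg:sampling} is $(r_{\vx}/\sigma)$‑GDP with respect to the pair $(\vx,\vx+e)$.

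It remains to turn this into a prediction certificate. Post‑composing with the deterministic classifier $g$ and the $\argmax$ is post‑processing, so $y\mapsto\PP(g(\vx_0)=y)$ under $\vx$ and under $\vx+e$ are still related by $(r_{\vx}/\sigma)$‑GDP. The Neyman–Pearson characterization of GDP then yields $\PP_{\vx+e}(g(\vx_0)=y_+)\ge\Phi\bigl(\Phi^{-1}(\underline{p_+})-r_{\vx}/\sigma\bigr)$ for the top class and $\PP_{\vx+e}(g(\vx_0)=y_-)\le\Phi\bigl(\Phi^{-1}(\overline{p_-})+r_{\vx}/\sigma\bigr)$ for every $y_-\neq y_+$ — the same step used to obtain \cref{eq:ars}. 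Substituting $r_{\vx}=\tfrac{\sigma}{2}\bigl(\Phi^{-1}(\underline{p_+})-\Phi^{-1}(\overline{p_-})\bigr)$ makes the first lower bound at least the second upper bound for every competing class, so $M_S(\vx+e)=y_+=M_S(\vx)$.

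The main obstacle is the lifting step. \cref{prop:sens} only controls a single sensitive pixel, and one cannot simply chain single‑pixel GDP guarantees along a coordinate path from $\vx$ to $\vx+e$: that would aggregate sensitivities in $\ell_1$ and lose the $\ell_2$ scaling we need. The fix is to re‑open the per‑step composition and exploit that each $\calM_t$ is a diagonal‑covariance Gaussian mechanism, so the pixelwise contributions to $\mu_t(e)^2$ add (Pythagorean aggregation within a step), while the per‑pixel filter, enforced coordinate by coordinate, simultaneously certifies the single aggregate budget $\sum_t\mu_t(e)^2\le\|e\|_2^2/\sigma^2$. Some care is also needed to confirm that the data‑dependent covariances and the data‑dependent stopping rule are admissible in fully‑adaptive GDP composition; this is handled by conditioning on the full history, which renders step $t$ a fixed‑covariance Gaussian and makes the filter decision history‑measurable.
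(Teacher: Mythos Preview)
Your argument is correct, but it follows a different composition route than the paper. The paper works \emph{pixel-first}: it invokes \cref{prop:sens} to get that the pixel-$i$ view of \cref{alg:sampling} is $r_i/\sigma$-GDP, then applies \emph{concurrent composition} \citep{haney2023concurrent} across pixels to obtain $\sqrt{\sum_i r_i^2}/\sigma=r_{\vx}/\sigma$-GDP for the whole mechanism, and finally cites Corollary~2.2 of \citet{ars} for the certificate. You instead work \emph{step-first}: you treat each denoising step as a single diagonal-covariance Gaussian mechanism on the full vector, get the per-step $\mu_t(e)^2$ as a Mahalanobis norm (the Pythagorean sum over pixels), and then compose over time with the GDP filter, swapping the summation order so that the per-pixel budget enforced by \cref{alg:stop} bounds $\sum_t\mu_t(e)^2\le\|e\|_2^2/\sigma^2$. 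Your route is arguably more self-contained---it reuses only the filter already needed for \cref{prop:sens} and avoids the concurrent-composition theorem---at the cost of re-opening the step-level analysis rather than using \cref{prop:sens} as a black box. The paper's route is more modular but leans on the extra concurrent-composition tool to turn $d$ per-pixel guarantees into one $\ell_2$ guarantee. Your final Neyman--Pearson step is exactly what Corollary~2.2 of \citet{ars} packages, so the two arguments agree from the point where \cref{alg:sampling} is shown $r_{\vx}/\sigma$-GDP.
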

\begin{proof}
Consider any adversarial change $e \in B_2(r_x)$. Denoting $r_i$ the change at each pixel $i$, we have that $\sum_i r_i^2 = r^2_x$. By Proposition \ref{prop:sens}, we also know that for any $i$, considering only the pixel $i$ mechanism is $\tfrac{r_i}{\sigma}$-GDP. By concurrent composition \cite{haney2023concurrent} of the GDP mechanisms over each pixel, Algorithm \ref{alg:sampling} is $\tfrac{\sqrt{\sum_i r^2_i}}{\sigma} = \tfrac{r_x}{\sigma}$-GDP. Applying Corollary 2.2 of \citet{ars} concludes the proof.
\end{proof} 
\section{Experiments}

We evaluate certified $\ell_2$ robustness on ImageNet \citep{deng2009imagenet}. We follow \citet{carlini2023free}, using the unconditional 256 $\times$ 256 guided diffusion model of \citet{dhariwal2021diffusion} and a pre-trained BEiT large model \citep{bao2021beit} as classifier ($88.6\%$ top-1 validation accuracy). We use three noise levels $\sigma\in\{1.0, 1.5, 2.0\}$ and randomly select 250 samples (each of a different class) from the ImageNet validation set for certification. We denoise sequentially over 20 evenly-spaced timesteps from the original 1000 denoising steps (i.e. 999, 949, 899, \dots). We compare the certified accuracy at $r=0$ and the clean accuracy of several methods: ADDS with 1 and 5 votes, \citet{carlini2023free}, and DensePure \citep{densepure} with 1 and 5 votes.
We also evaluate ADDS without doing unguided denoising past $t^*$, where we perform one-shot sampling of the final image (like in \citet{carlini2023free}) once out of budget ($\Lambda=0$ in Algorithm \ref{alg:sampling}).



\begin{table}[t]
\vspace{-0.1in}
\caption{Certified accuracy (at $r=0$) for ImageNet}
\label{tab:imagenet}
\begin{center}
\begin{small}
\begin{tabular}{c|ccc}
\toprule
$\sigma$ & 1.0 & 1.5 &  2.0\\
\midrule
Carlini et al.  & \textbf{62.0} & 38.4 & 26.8 \\
DensePure &  57.6 & 40.0 & 25.6 \\
DensePure w/ 5 votes & 61.6 & 45.6 & 31.2 \\
{\bf ADDS (Ours)} & 58.8  & 40.8 & 27.6 \\
{\bf ADDS w/ 5 votes} & 60.4  & \textbf{46.8} & \textbf{32.0} \\
{\bf ADDS (w/o unguided denoising)} & 61.2  & 44.8 & 31.2 \\
\bottomrule
\end{tabular}
\end{small}
\end{center}
\vspace{-0.4cm}
\end{table}

\begin{table}[t]
\vspace{-0.1in}
\caption{Clean accuracy for ImageNet}
\label{tab:imagenet-clean}
\begin{center}
\begin{small}
\begin{tabular}{c|ccc}
\toprule
$\sigma$ & 1.0 & 1.5 &  2.0\\
\midrule
Carlini et al.  & 69.6 & 55.2 & 46.8 \\
DensePure &  68.4 & 58.0 & 46.4\\
DensePure w/ 5 votes & 68.4 & 55.2 & 45.2\\
{\bf ADDS (Ours)} & 68.8 & 58.0 & 47.6 \\
{\bf ADDS w/ 5 votes} & 68.8  & 57.2 & 46.8 \\
{\bf ADDS (w/o unguided denoising)} & \textbf{70.0}  & {\bf 60.0} & {\bf 48.0} \\
\bottomrule
\end{tabular}
\end{small}
\end{center}
\vspace{-0.5cm}
\end{table}

\cref{tab:imagenet} shows the certified accuracy at $r=0$, and \cref{tab:imagenet-clean}, the clean accuracy.
ADDS without unguided denoising performs best in clean accuracy.
In certified accuracy, \citet{carlini2023free} performs best at $\sigma=1.0$, and ADDS with 5 votes is best at larger noise. ADDS without unguided denoising is competitive across all noise levels. We make three important observations that explain these results.

First, unguided denoising (in DensePure and ADDS after the budget is exhausted) increases variance. Indeed, conditioned on the robustness noise draws (i.e., DensePure's diffusion noising process or the noise during ADDS guiding), additional denoising steps introduce more noise, and hence variance.
We can see on \cref{tab:imagenet-clean} that this variance systematically degrades clean accuracy. As a result, \citet{carlini2023free} outperforms in low noise ($\sigma=1$) and ADDS without unguided denoising is always best.

Second, and perhaps surprisingly, going from 1 vote to 5 votes, which alleviates the variance increase, degrades the clean accuracy of both DensePure and ADDS, while increasing certified accuracy.
This is because voting concentrates predictions on the top class, conditioned on robustness noise draws.
On ``easy'' images that are often well classified, this increases the top (correct) probability, thereby improving certified accuracy.
However, on harder images where several labels typically have high probability, majority voting can bias predictions toward the incorrect label, reducing pure accuracy (see details in \cref{appendix:exp-details} and \cref{fig:adds_voting_distribution}).

Third, compared to \citet{carlini2023free}, ADDS makes a trade-off between original image fidelity and noise: each guided denoising step mixes the predicted image $\hat{\vx}_0^t$ with $\vx$ such that the signal component of $\vx_t$ is not composed entirely of the original image, but also of predictions from the denoising model.
As a result, when ADDS runs out of budget, it reaches a state $\vx_{t^*}$ with less noise that the starting point of \citet{carlini2023free} (which uses all of the budget to sample $\vx_t$ from $q(\vx_t|\vx)$).
This translates to an increased level of detail in generated images, in exchange for the potential of deviating from the original (e.g. a blurry dog might become a detailed cat).
The tradeoff is more pronounced at higher noise levels ($\sigma=1.5, 2.0$), at which ADDS without unguided denoising also outperforms \citet{carlini2023free} in robust accuracy. At lower noise ($\sigma=1.0$) the noise scale sampled by \citet{carlini2023free} is already small enough for good one-shot denoising in most cases (see details in \cref{appendix:exp-details} and \cref{fig:images_sigma=1.0,fig:images_sigma=1.5,fig:images_sigma=2.0}).



\clearpage
\section*{Acknowledgements}
FS was partially supported by a UBC Four Year Fellowship and SL was partially supported by ServiceNow and Mitacs. We are grateful for the financial support of the Natural Sciences and Engineering Research Council of Canada (NSERC) [reference number RGPIN-2022-04469]. This research was enabled by computational support provided by the Digital Research Alliance of Canada (alliancecan.ca), by the University of British Columbia’s Advanced Research Computing (UBC ARC), and by ServiceNow.
\bibliography{main}
\bibliographystyle{icml2025}

\newpage
\appendix
\onecolumn

\section{Extended Related Work and Background}

\textbf{Gaussian Differential Privacy (GDP or $f$-DP)} \citep{gdp} is an extension of $(\varepsilon, \delta)$-Differential Privacy that defines privacy according to the power of any hypothesis test to differentiate a Gaussian-distributed output from any of its neighbours. Consider a Gaussian mechanism $\calM : \boldsymbol{x} \rightarrow \calA(\boldsymbol{x}) + \boldsymbol{z}$, where $\boldsymbol{z} \sim \calN(0,\frac{r^2}{\mu^2}\mathbf{I})$ and $\calA$ is some model. According to \citet[Theorem 2.7]{gdp}, for any neighbouring inputs $\boldsymbol{x}, \boldsymbol{x'}$ such that $\|\calA(\boldsymbol{x}) - \calA(\boldsymbol{x'})\|_2^2 \le r$, $\calM$ is $\mu$-GDP.

\textbf{DDS Variations}
Concurrently to \citet{densepure}, DiffSmooth \citep{zhang2023diffsmooth} proposes a one-shot denoising procedure that locally smooths the output by taking a majority vote of the predictions for each purified sample over multiple Gaussian noise samples. This process relies on much stronger assumptions than the RS approach we, \citet{densepure}, and \citet{ars} build on, which we believe is risky in the adversarial setting we consider.

\textbf{Adding Guidance during Denoising}. Recent work \citep{dhariwal2021diffusion,bansal2023universal} has explored different techniques of injecting guidance into diffusion models to steer the denoising trajectories towards task-specific semantics, such that controllable high-fidelity outputs are sampled without retraining the underlying generator. To further improve robustness, various methods \citep{wang2022guided,wu2022guided,bai2024diffusion} has utilized guidance with diffusion in adversarial purification. Although obtaining empirical accuracy improvement, these methods do not provide provable certification. Inspired by \cite{bansal2023universal}, our adaptive pipeline takes advantage of the guidance during denoising, while the certification analysis stays sound with the benefit from ARS.

\section{Extended Experiments}
\label{appendix:exp-details}

\textbf{Implementation Details}. We did all our experiments on a 250 image subset of the 50,000 image ImageNet \citep{deng2009imagenet} validation set, sampling every 200th image. We computed the top class lower bound over 1000 samples and the certified radius of 10,000 samples. For ADDS (every version), we chose guiding scales of 0.8 at $\sigma=1.0, 1.5$ and 0.9 at $\sigma=2.0$ and fixed 20 denoising steps (i.e. 999, 949, 899, \dots). We tuned these parameters in the $\sigma=1.0,1.5$ settings on ADDS (1 vote). If we are unable to do a guiding step at the guiding scale we chose, we compute the largest possible guiding scale such that all of the budget is expended. For \citet{carlini2023free}, we found that noising the image by $\sigma$ performed better than by the noise scale corresponding to the timestep that is at least as noisy as $\sigma$ (i.e. for timestep $t$, the noise scale would be $\sqrt{1-\bar{\alpha}_t} \ge \sigma$) -- DensePure also noises the image by $\sigma$ rather than $\sqrt{1-\bar{\alpha}_t}$.

\begin{figure*}[ht]
\vspace{-0.2cm}
\centering
{\includegraphics[width=\textwidth]{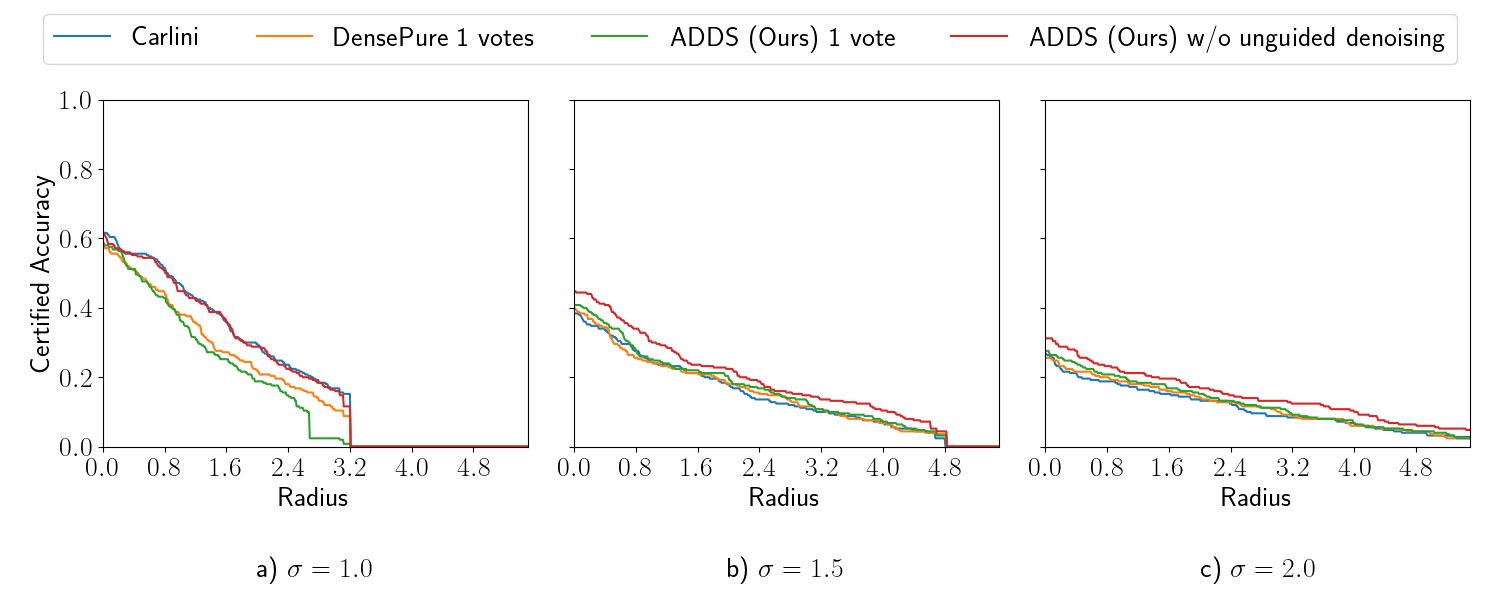}}
\vspace{-0.8cm}
\caption{\footnotesize%
\textbf{Certified Test Accuracy of 1 Vote Methods on ImageNet.}
Certified accuracy of ADDS with 1 vote is competitive with \citet{densepure} in (a) and with \citet{carlini2023free} in (b) and (c). Although at radius 0, we showed in \cref{tab:imagenet} that we improve on \citet{carlini2023free} by two percentage points, (b) and (c) show that for larger radius, the gap tightens. ADDS without unguided de-noising is competitive with \citet{carlini2023free} in (a) and performs better than both in (b) and (c).}
\label{fig:vote_1_certification}
\end{figure*}

In the non-voting setting, \cref{fig:vote_1_certification} shows that ADDS without unguided denoising performs the best at $\sigma=1.5, 2.0$ at both radius 0 and larger radii. At $\sigma=1.0$, \citet{carlini2023free} performs the best and is matched by ADDS without unguided denoising. The superior performance of these methods at $\sigma=1.0$ is because they do not do unguided denoising. At the timestep in the denoising process where the noise has scale $\sigma=1.0$, the predictions of the denoiser are clear and unambiguous. As a result, the classifier is able to accurately predict on the denoised images. This is reflected by the high pure accuracy of \citet{carlini2023free} and ADDS without unguided denoising in \cref{tab:imagenet-clean}. And since these methods do not do unguided denoising -- which introduces an additional source of variance -- they have less variance in classifications than DensePure and ADDS. As a result, they are able to certify more correctly predicted images, and consequently obtain the highest certified accuracies. The reason why \citet{carlini2023free} performs better than ADDS without unguided denoising is because ADDS still introduces some additional variance because the guiding scale $s < 1$.

\begin{figure*}[ht]
\vspace{-0.2cm}
\centering
{\includegraphics[width=\textwidth]{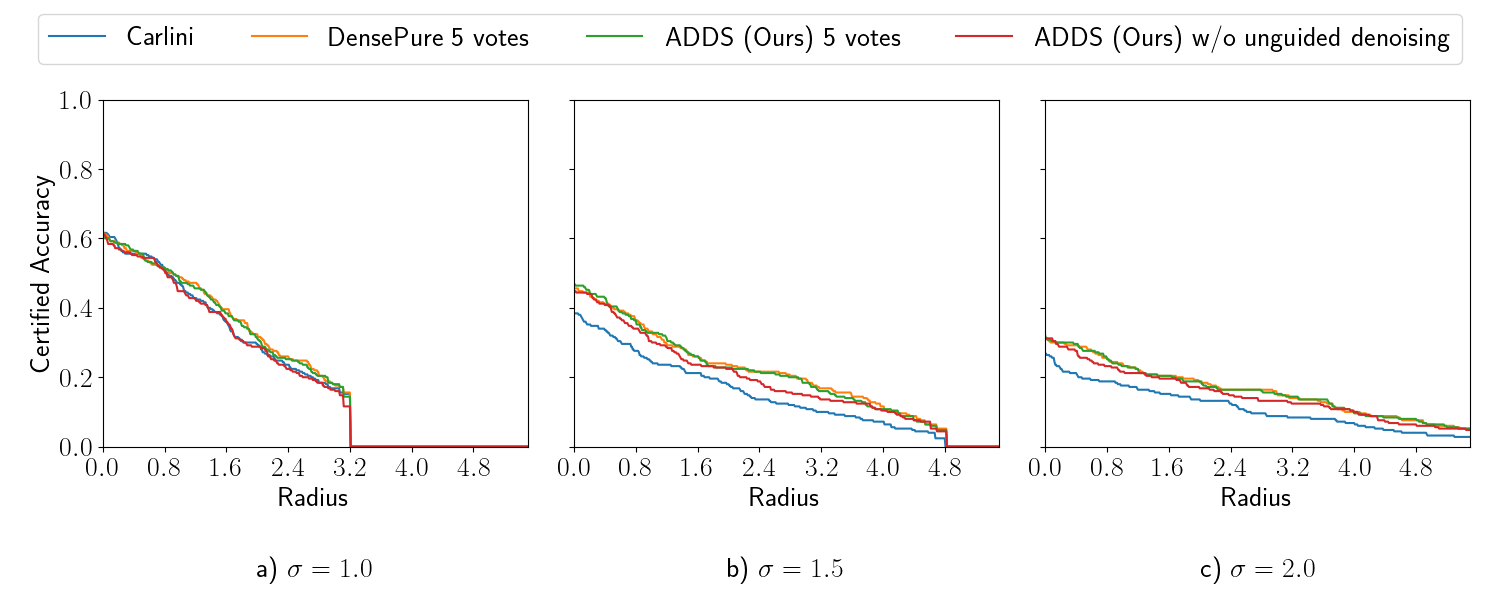}}
\vspace{-0.8cm}
\caption{\footnotesize%
\textbf{Certified Test Accuracy of 5 Vote Methods on ImageNet.}
Certified accuracy of ADDS with 5 vote is competitive with \citet{densepure} with 5 votes at every $\sigma$. ADDS without unguided de-noising is outperformed by \citet{densepure} and ADDS with 5 votes, but the gap tightens as $\sigma$ increases.}
\label{fig:vote_5_certification}
\end{figure*}

In \cref{fig:vote_5_certification}, we see that majority voting over 5 unguided trajectories increases the certified accuracy of DensePure and ADDS. Both DensePure and ADDS with 5 votes match and slightly exceed the certified accuracy of \citet{carlini2023free} and ADDS without unguided denoising. In \cref{fig:adds_voting_distribution}, we see the effect of majority voting on a selection of images where voting causes a change in classification or certification. For example, at $\sigma=1.5$, image 12400 changes from abstained to certified and image 23200 changes from misclassified to correctly classified (although still abstained). It should be noted that \cref{fig:adds_voting_distribution} does not reflect that the vast majority of these changes are abstains that become certified. Voting concentrates counts on the most frequent classes, conditioned on robustness noise draws. The count of each highest frequency class increases -- shown as the class becoming more yellow. By further concentrating classification towards the highest frequency classes, voting is able to increase the certified radius -- reflected in \cref{tab:imagenet} by an improvement in certified accuracy. We can see that all misclassifications as a result of voting occur in abstains (e.g. images 18200, 46000, 47000), where multiple classes have high counts. In these cases, certified accuracy stays the same, but pure accuracy reduces. \cref{fig:runtime} shows that voting comes with the drawback of being more computationally expensive.

\begin{figure*}[ht]
\vspace{-0.2cm}
\centering
{\includegraphics[width=0.8\textwidth]{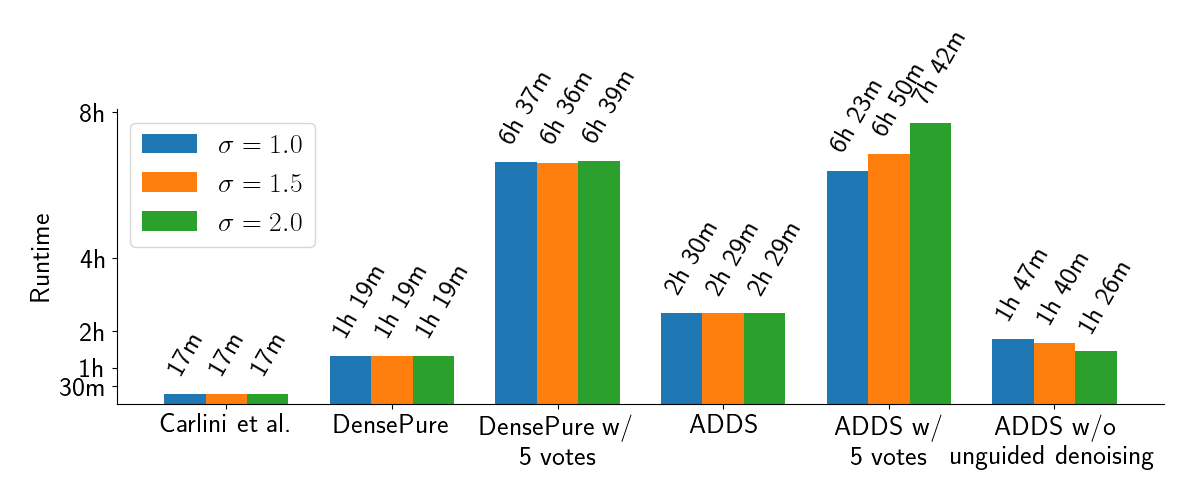}}
\vspace{-0.7cm}
\caption{\footnotesize%
\textbf{Runtime Plot.}
The time it takes each algorithm from \cref{tab:imagenet} and \cref{tab:imagenet-clean} to certify one image over 10,000 certification samples. We run each algorithm on one A100-40GB GPU on the Narval Compute Canada cluster.}
\label{fig:runtime}
\end{figure*}

\begin{figure*}[ht]
\centering
{\includegraphics[width=\textwidth]{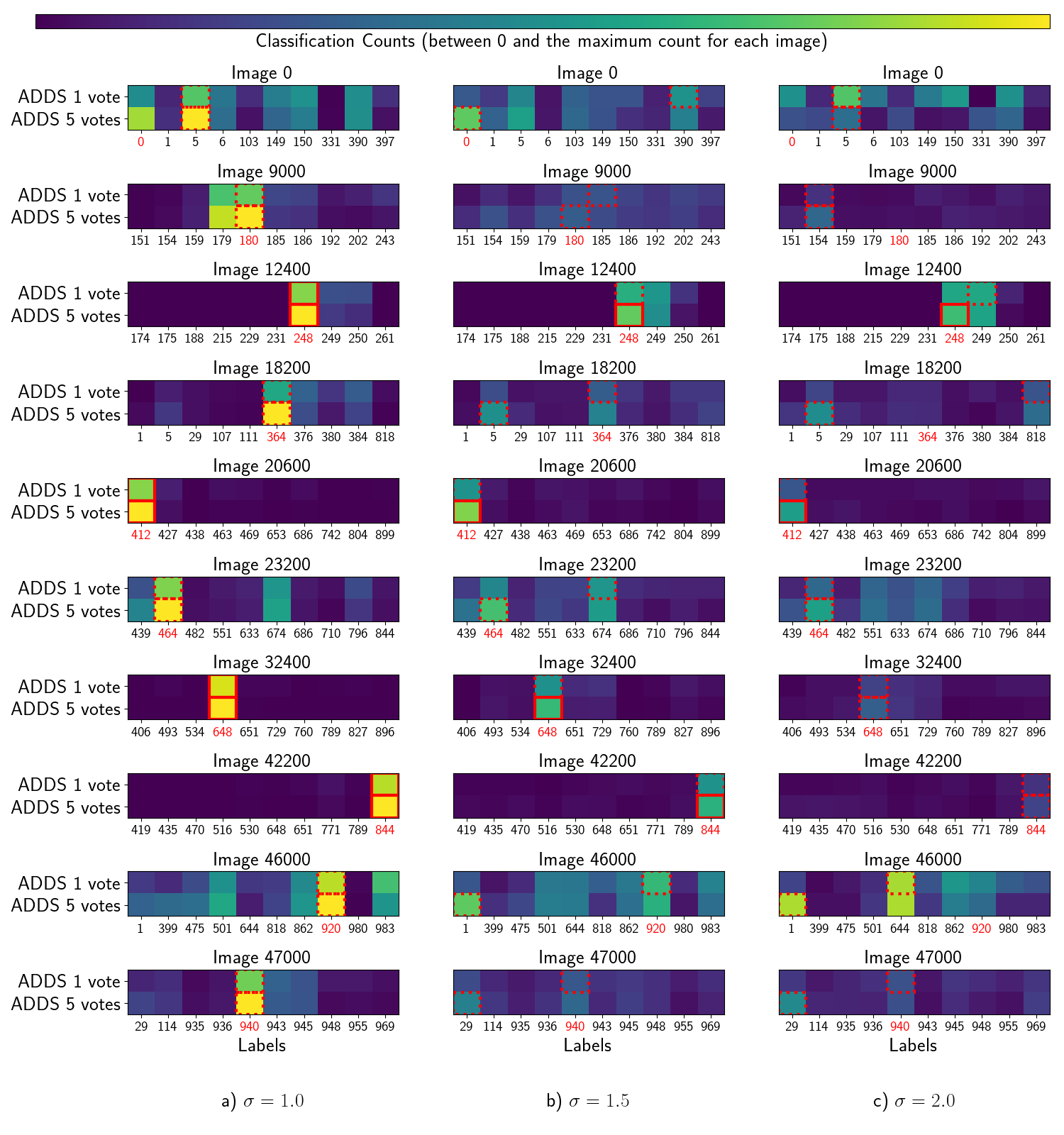}}
\vspace{-0.7cm}
\caption{\footnotesize%
\textbf{Distribution of Classification Counts of ADDS over 1 and 5 Votes.}
The plot shows the classification counts of ADDS at 1 and 5 votes over 1000 noise seeds on a selection of 10 images. Each column corresponds to a different privacy budget. For each image, we show the 10 most frequent labels, with the label in red being the true label. The solid red box corresponds to a certified prediction and the dashed red box corresponds to a prediction that is not certified. The selected images were chosen because they exhibited a change in classification correctness or certification as a result of voting. The proportion of these changes is not reflected. In particular, the number of occurrences where voting causes an abstained classification to become certified (for example, image 42200 at $\sigma=1.5$) far exceeds the number of correct classifications that become incorrect (image 18200 at $\sigma=1.5$).}
\label{fig:adds_voting_distribution}
\end{figure*}

\begin{figure*}[ht]
\centering
{\includegraphics[width=\textwidth]{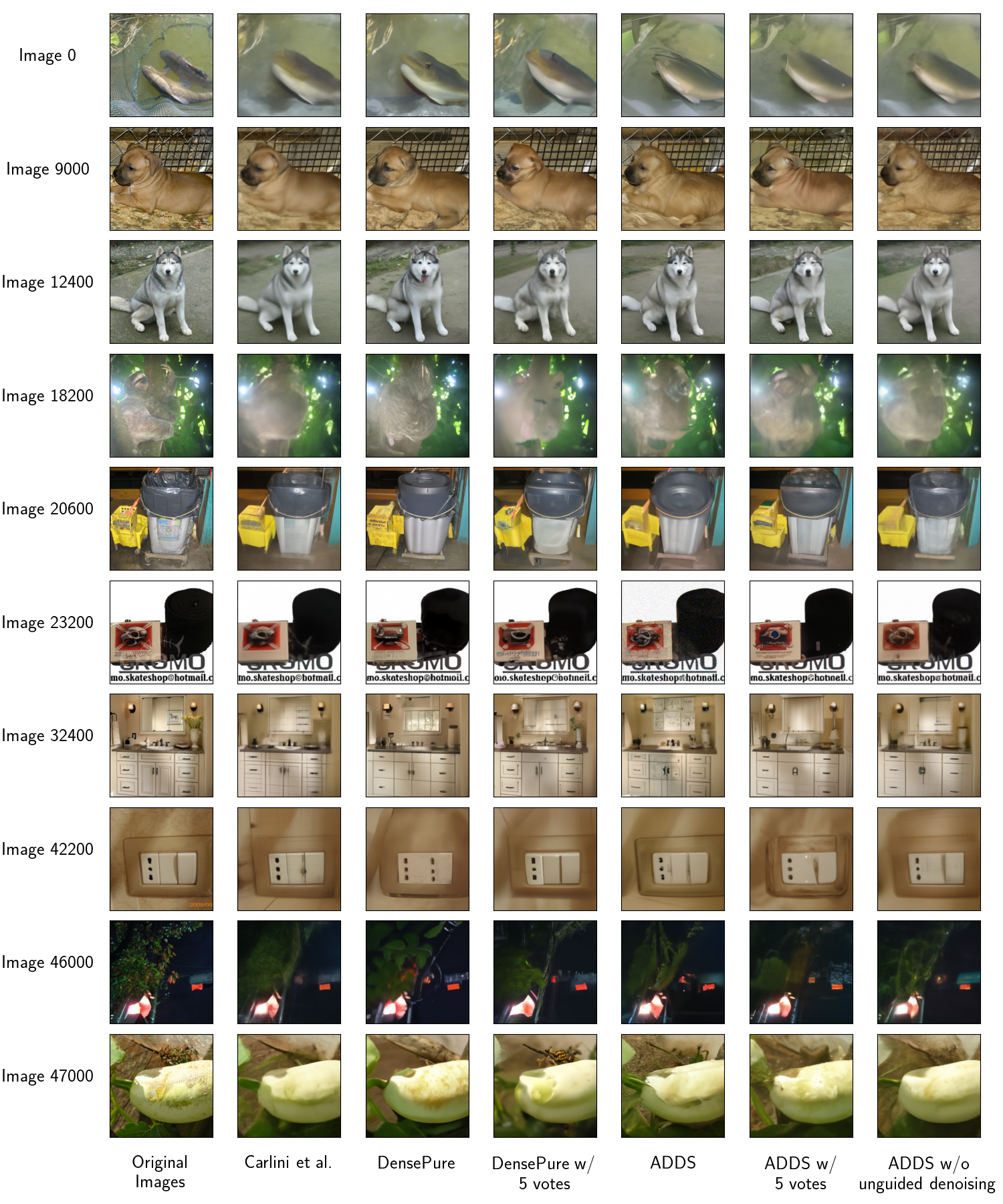}}
\vspace{-0.7cm}
\caption{\footnotesize%
\textbf{Selection of Denoised Images at $\boldsymbol{\sigma=1.0}$.}}
\label{fig:images_sigma=1.0}
\end{figure*}

\begin{figure*}[ht]
\centering
{\includegraphics[width=\textwidth]{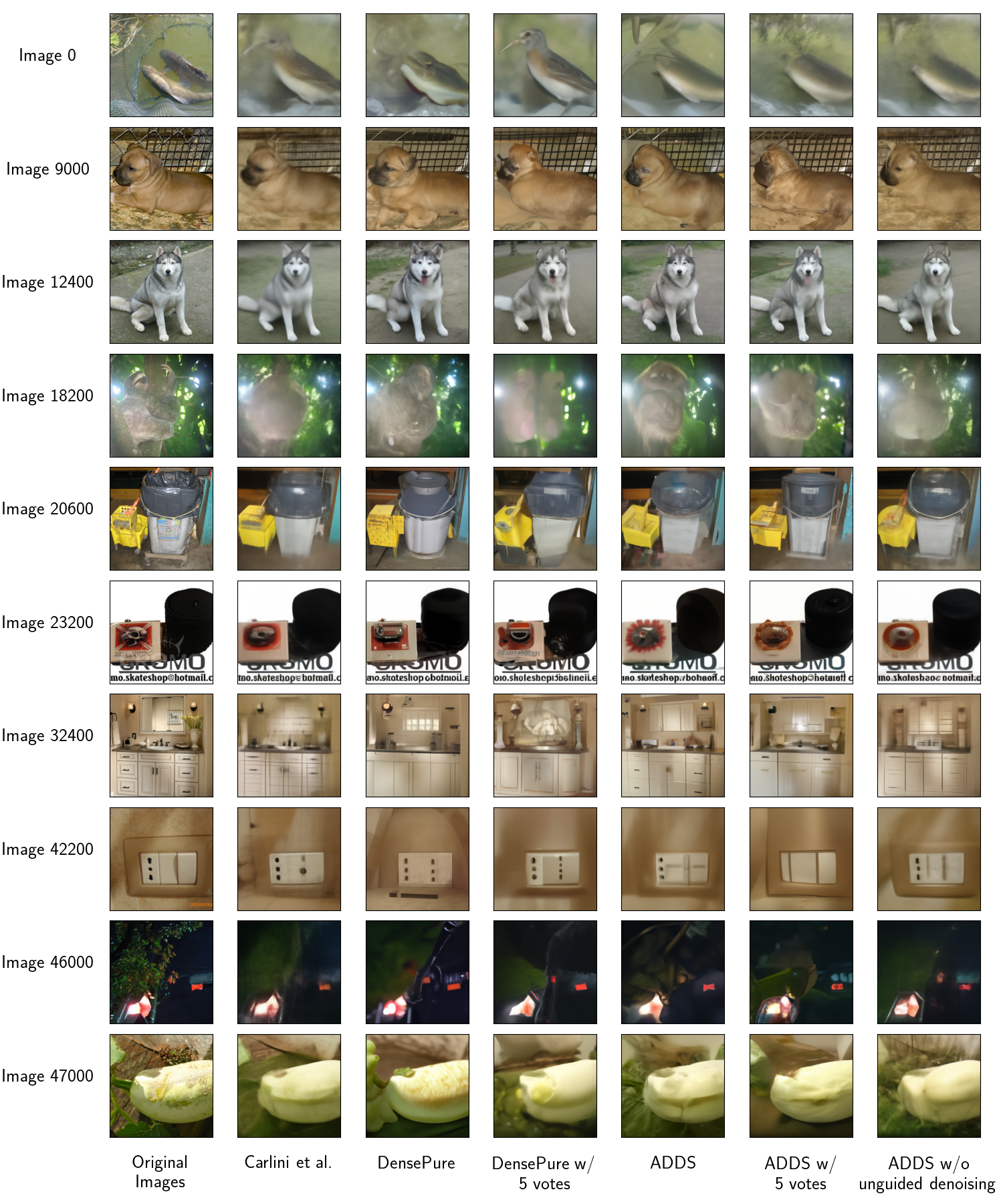}}
\vspace{-0.7cm}
\caption{\footnotesize%
\textbf{Selection of Denoised Images at $\boldsymbol{\sigma=1.5}$.}}
\label{fig:images_sigma=1.5}
\end{figure*}

\begin{figure*}[ht]
\centering
{\includegraphics[width=\textwidth]{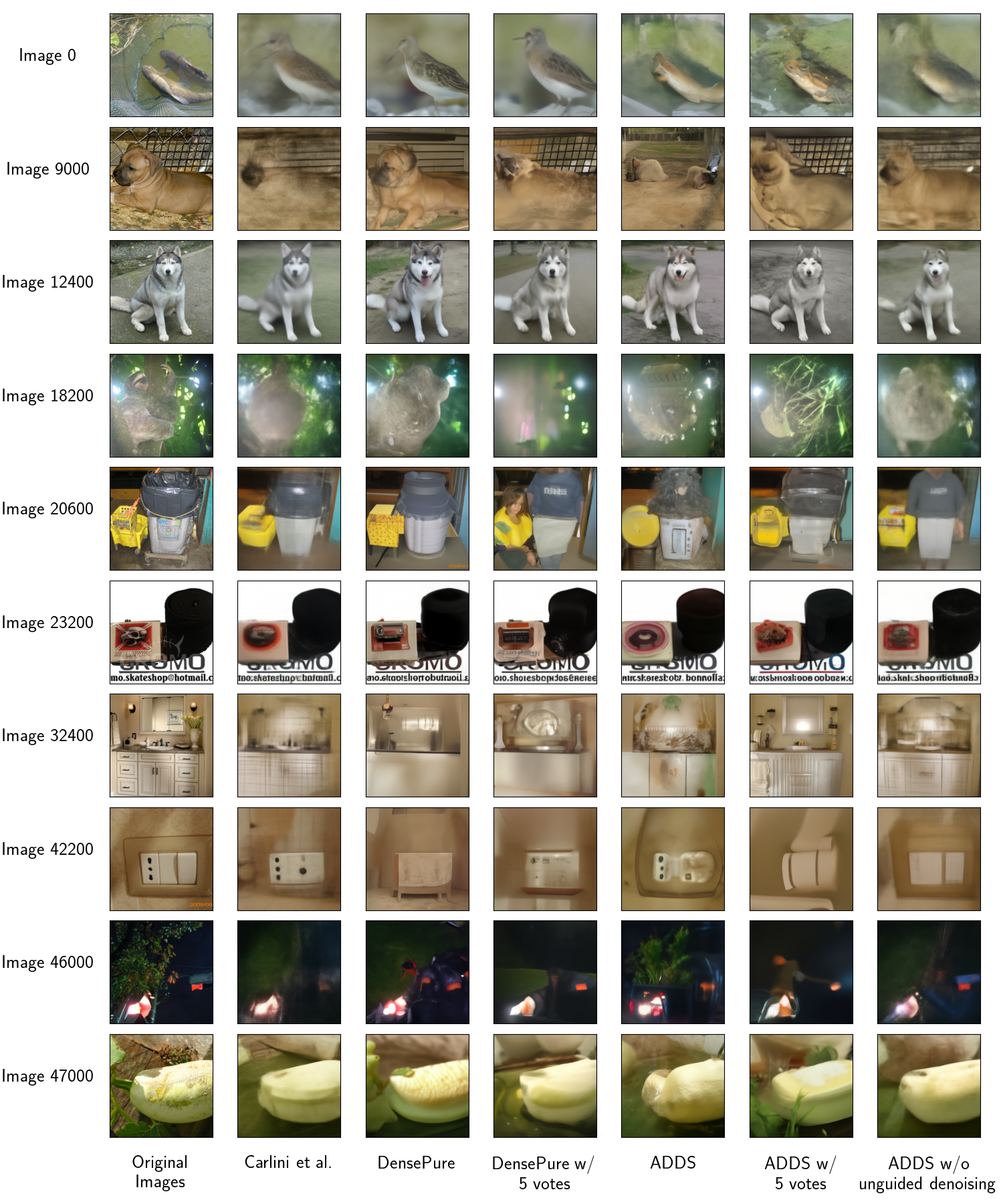}}
\vspace{-0.7cm}
\caption{\footnotesize%
\textbf{Selection of Denoised Images at $\boldsymbol{\sigma=2.0}$.}}
\label{fig:images_sigma=2.0}
\end{figure*}


\end{document}